\documentclass[letterpaper,10pt,conference]{ieeeconf}

\IEEEoverridecommandlockouts
\usepackage{amsmath,amssymb}
\usepackage{graphicx}
\usepackage{booktabs}
\usepackage{tabularx}
\usepackage{cite}
\usepackage{tikz}
\usepackage{pgfplots}
\pgfplotsset{compat=1.18}
\usetikzlibrary{arrows.meta,calc,positioning,patterns,
                decorations.pathreplacing,backgrounds,fit}
\usepgfplotslibrary{groupplots,fillbetween}
\definecolor{bspTruth}{HTML}{000000}
\definecolor{bspEst}{HTML}{7F7F7F}
\definecolor{bspPlan}{HTML}{0072B2}
\definecolor{bspAcc}{HTML}{009E73}
\definecolor{bspRej}{HTML}{E69F00}
\definecolor{bspInfl}{HTML}{D55E00}
\definecolor{bspAlt}{HTML}{CC79A7}
\colorlet{bspDrop}{bspRej!14}

\providecommand{\bspdata}{figures/data}

\newif\ifbspplaceholder
\bspplaceholdertrue

\newcommand{\bspshade}[4]{%
  \pgfonlayer{axis background}%
    \fill[bspDrop] (axis cs:#1,#3) rectangle (axis cs:#2,#4);%
  \endpgfonlayer}

\tikzset{
  bsp arrow/.style={-{Stealth[length=4pt,width=3pt]}, line width=0.6pt},
  bsp darrow/.style={{Stealth[length=3pt,width=2.4pt]}-{Stealth[length=3pt,width=2.4pt]}, line width=0.5pt},
  bsp block/.style={draw, rounded corners=1.5pt, line width=0.6pt,
                    align=center, font=\scriptsize, inner sep=2.5pt},
  bsp label/.style={font=\scriptsize, inner sep=1pt},
  bsp panel/.style={font=\footnotesize, inner sep=0pt, anchor=north west},
}

\pgfplotsset{
  bsp base/.style={
    set layers,
    scale only axis,
    axis line style={line width=0.5pt},
    tick style={line width=0.4pt, black},
    major tick length=2pt,
    minor tick length=1pt,
    tick align=outside,
    tick pos=left,
    tick label style={font=\scriptsize},
    label style={font=\scriptsize},
    xlabel style={yshift=2pt},
    ylabel style={yshift=-2pt},
    title style={font=\scriptsize, yshift=-3pt},
    legend style={font=\scriptsize, draw=none, fill=white, fill opacity=0.85,
                  text opacity=1, inner sep=1pt, row sep=-1pt,
                  /tikz/every even column/.append style={column sep=4pt}},
    legend cell align=left,
    every axis plot/.append style={line width=0.8pt},
    clip mode=individual,
    enlargelimits=false,
  },
  bsp zero/.style={black!35, line width=0.4pt, forget plot},
  bsp truth/.style={bspTruth, line width=0.7pt},
  bsp plan/.style={bspPlan, line width=0.9pt},
  bsp planband/.style={draw=none, fill=bspPlan, fill opacity=0.22, forget plot},
  bsp estband/.style={draw=none, fill=bspEst, fill opacity=0.20, forget plot},
  bsp estedge/.style={bspEst, line width=0.5pt, dashed, forget plot},
  bsp infl/.style={bspInfl, line width=0.9pt, densely dashed},
  /pgf/declare function={
    gaussv(\x,\mu,\v) = exp(-((\x-\mu)^2)/(2*\v))/sqrt(2*pi*\v);
  },
}

\makeatletter
\newcommand{\figstylecaption}{%
  \long\def\@makecaption##1##2{%
    \vskip0.5\baselineskip
    \setbox\@tempboxa\hbox{\normalfont\footnotesize ##1.\nobreakspace\nobreakspace ##2}%
    \ifdim\wd\@tempboxa>\hsize
      \parbox[t]{\hsize}{\normalfont\footnotesize\noindent ##1.\nobreakspace\nobreakspace ##2}%
    \else
      \hbox to\hsize{\hfil\box\@tempboxa\hfil}%
    \fi}}
\makeatother

\newcommand{\E}{\mathbb{E}}
\newcommand{\Prob}{\mathbb{P}}
\newcommand{\Cov}{\operatorname{Cov}}
\newcommand{\tr}{\operatorname{tr}}

\newtheorem{proposition}{Proposition}
\newtheorem{assumption}{Assumption}

\title{%
Belief-Space Planning with Planner-Conditioned Estimator Error under Intermittent Observations
}

\author{Peter M. Donley and Joseph L. Moore
\thanks{Johns Hopkins University Whiting School of Engineering
        {\tt\small \{pdonley1, jlmoore\}@jhu.edu}}%
}

\begin{document}

\maketitle


\begin{abstract}

Belief-space planners with separately designed or off-the-shelf estimators may have access to state-relevant information the estimator does not observe. Consequently, even an estimator that minimizes mean-squared error under its own information can have a nonzero error mean when conditioned on planner information. When future corrections are intermittent and stochastic, differences between accepted and rejected error means introduce additional uncertainty terms to correction events that zero-mean assumptions ignore. In this paper, we propose a belief-space planning approach that predicts, propagates, and penalizes planner-conditioned estimator error along evaluated trajectories. A planner-conditioned estimator-error model combines affine error dynamics with a moment recursion that preserves stochastically-induced correction uncertainty. We describe a method by which to predict estimator error dynamics within specified operating regimes and incorporate predicted error moments into a task-weighted quadratic risk objective. We evaluate the approach in simulation for a tilt-rotor VTOL landing on a ship deck using receding-horizon planning. We find that conditioning on planner information reduced error-prediction loss for a command-blind EKF by 13.4\% relative to a planner-ignorant model, with strongly regime and horizon-dependent forecasting abilities. In a small set of 16 paired closed-loop trials, the planner lowered the median terminal task gauge from 1.60 to 0.99, and predicted estimator error beyond 2s within a factor of 1.3, versus a 5.7-fold underprediction by covariance-only planning.
\end{abstract}


\section{INTRODUCTION}
Planners and estimators are often designed separately, and as such, may observe different subsets of a system's information.  In particular, planners may have limited information about the internal state representations and update procedures of ``off-the-shelf" or otherwise black-box estimators, and an estimator may not receive direct information about the planner's commands or expected future states. If the planner has relevant information the estimator lacks, an estimator with zero-mean error under its own information may have a nonzero planner-conditioned error mean. Consider the task of landing an aerial robot on a ship-deck. Deck-relative observations become intermittent over the course of the task. During these periods of intermittency, estimator error accumulates due to a lack of posterior correction availability. However, the planner may be able to partially predict this error and incorporate its predictions into trajectory design. We study the interaction between this predictable error mean and the stochastic availability of future corrections.

Conventional covariance-only belief-space planning generally assumes zero-mean estimator error. However, if nonzero-mean estimator error exists, this can have a profound impact on planner performance, as the planner will be unable to adequately predict future expected estimation error, probability of correction acceptance, or post-correction uncertainty. Additionally, when correction changes the predicted conditional error mean, stochastic correction outcomes contribute a spread-of-the-means covariance term. The resulting first two moments generally cannot be reproduced by simply inflating measurement noise to account for probabilistic observation.
\begin{figure}[t]
    \centering
\tikzset{
vtol tilt/.store in=\vtoltilt, vtol tilt=90,  
vtol yaw/.store in=\vtolyaw,   vtol yaw=-18,  
vtol elev/.store in=\vtolelev, vtol elev=18,  
vtol/.pic={
  \pgfmathsetmacro\Xx{cos(\vtolyaw)}
  \pgfmathsetmacro\Xy{sin(\vtolyaw)*sin(\vtolelev)}
  \pgfmathsetmacro\Yx{-sin(\vtolyaw)}
  \pgfmathsetmacro\Yy{cos(\vtolyaw)*sin(\vtolelev)}
  \pgfmathsetmacro\Zy{cos(\vtolelev)}
  \pgfmathsetmacro\Ax{cos(\vtoltilt)*\Xx}
  \pgfmathsetmacro\Ay{cos(\vtoltilt)*\Xy+sin(\vtoltilt)*\Zy}
  \pgfmathsetmacro\Bx{-sin(\vtoltilt)*\Xx}
  \pgfmathsetmacro\By{-sin(\vtoltilt)*\Xy+cos(\vtoltilt)*\Zy}
  \begin{scope}[x={(\Xx cm,\Xy cm)}, y={(\Yx cm,\Yy cm)}, z={(0cm,\Zy cm)}]
    \fill[black!55] (-0.235,0,0.010) -- (-0.280,0.070,0.060) -- (-0.300,0.070,0.060) -- (-0.292,0,0.010) -- cycle;
    \fill[black!85] (-0.176,0,0.008) -- (-0.176,0,0.030) -- (-0.164,0,0.030) -- (-0.164,0,0.008) -- cycle;
    \fill[black!40, opacity=0.35] plot[domain=0:360, samples=48, variable=\t] ({-0.17+0.034*cos(\t)},{0.034*sin(\t)},0.033) -- cycle;
    \draw[black!85, line width=0.5pt, line cap=round] (-0.200,0,0.033) -- (-0.140,0,0.033);
    \draw[black!85, line width=1.0pt, line cap=round] (0.03,0,0.010) -- (-0.29,0,0.010);
    \foreach \s/\c in {0.020/70, -0.020/85}{
      \fill[black!\c]
        (0.255,\s,-0.002)
        .. controls (0.255,\s,0.015) and (0.245,\s,0.024) .. (0.222,\s,0.025)
        -- (0.11,\s,0.027)
        .. controls (0.07,\s,0.026) and (0.045,\s,0.019) .. (0.015,\s,0.012)
        -- (0.015,\s,0.006)
        .. controls (0.05,\s,-0.006) and (0.09,\s,-0.021) .. (0.13,\s,-0.023)
        -- (0.215,\s,-0.022)
        .. controls (0.245,\s,-0.021) and (0.255,\s,-0.013) .. cycle;
    }
    \fill[black!45]
      (0.247,-0.020,0.010) .. controls (0.241,-0.020,0.021) and (0.226,-0.020,0.026) .. (0.196,-0.020,0.027)
      -- (0.193,-0.020,0.013) .. controls (0.212,-0.020,0.011) and (0.235,-0.020,0.010) .. cycle;
    \coordinate (vtolcam) at (0.212,-0.006,-0.026);
    \coordinate (-cam) at (0.212,-0.006,-0.037);
    \fill[black!60, draw=black!85, line width=0.3pt, line join=round]
      (0.185,0,0.034) -- (0.165,0.30,0.044) .. controls (0.160,0.31,0.044) and (0.120,0.31,0.044) .. (0.112,0.29,0.044)
      -- (0.080,0,0.034) -- (0.112,-0.29,0.044) .. controls (0.120,-0.31,0.044) and (0.160,-0.31,0.044) .. (0.165,-0.30,0.044) -- cycle;
    \foreach \s in {0.11,-0.11}{
      \begin{scope}[shift={(0.180,\s,0.038)}, x={(\Ax cm,\Ay cm)}, z={(\Bx cm,\By cm)}]
        \fill[black!80] (-0.010,0,-0.005) -- (0.026,0,-0.006) .. controls (0.031,0,-0.006) and (0.031,0,0.006) .. (0.026,0,0.006) -- (-0.010,0,0.005) -- cycle;
        \fill[black!85] (0.026,0,-0.0065) -- (0.035,0,-0.0065) -- (0.035,0,0.0065) -- (0.026,0,0.0065) -- cycle;
        \fill[black!40, opacity=0.35] plot[domain=0:360, samples=48, variable=\t] (0.038,{0.040*cos(\t)},{0.040*sin(\t)}) -- cycle;
        \draw[black!85, line width=0.5pt, line cap=round] (0.038,0,-0.034) -- (0.038,0,0.034);
      \end{scope}
    }
    \fill[black!85] (-0.233,0,0.010) -- (-0.278,-0.070,0.060) -- (-0.298,-0.070,0.060) -- (-0.292,0,0.010) -- cycle;
  \end{scope}
  \fill[black] (vtolcam) circle (0.011);
  \fill[black!40] ([shift={(0.004,-0.003)}]vtolcam) circle (0.0032);
  },
}
\begin{tikzpicture}
  \begin{axis}[bsp base, name=scene,
      width=0.80\columnwidth, height=1.75cm,
      xmin=0, xmax=8, ymin=0, ymax=2.05,
      axis lines=none, clip=false]
    \bspshade{2}{5}{0}{2.05}
    \addplot[domain=0:8.5, samples=170, bspPlan!45, line width=0.6pt]
      {0.17+0.035*sin(deg(5*x))};
    \filldraw[fill=black!10, draw=black!70, line width=0.5pt]
      (axis cs:6.75,0.52) -- (axis cs:6.98,0.12) -- (axis cs:8.5,0.12)
      -- (axis cs:8.5,0.52) -- cycle;
    \filldraw[fill=black!22, draw=black!70, line width=0.5pt]
      (axis cs:8.05,0.52) rectangle (axis cs:8.5,0.98);
    \draw[line width=1.5pt, black!85] (axis cs:6.80,0.52) -- (axis cs:8.05,0.52);
    \coordinate (spot) at (axis cs:7.6,0.53);
    \addplot[domain=0:2,   samples=30, black!65, line width=0.7pt]
      {0.57+1.3*(1-(x/7.6)^2)};
    \addplot[domain=2:5,   samples=40, bspRej!85!black, line width=0.9pt, densely dashed]
      {0.57+1.3*(1-(x/7.6)^2)};
    \addplot[domain=5:7.45, samples=30, black!65, line width=0.7pt]
      {0.57+1.3*(1-(x/7.6)^2)};
    \coordinate (p1) at (axis cs:1.0,1.847);
    \coordinate (p2) at (axis cs:3.6,1.578);
    \coordinate (p3) at (axis cs:6.0,1.080);
    \node[bsp label, anchor=south] at (axis cs:1.0,2.05) {deck in view};
    \node[bsp label, anchor=south, text=bspRej!70!black] at (axis cs:3.5,2.05) {dropout during flare};
    \node[bsp label, anchor=south] at (axis cs:6.5,2.05) {deck reacquired};
  \end{axis}
  \pic[rotate=-3] (A1) at (p1) {vtol};
  \pic[rotate=20] (A2) at (p2) {vtol};
  \pic[rotate=5] (A3) at (p3) {vtol};
  \draw[bspAcc, densely dotted, line width=0.7pt] (A1-cam) -- (spot);
  \draw[bspAcc, densely dotted, line width=0.7pt] (A3-cam) -- (spot);
  \fill[bspRej, opacity=0.35] (A2-cam) -- ++(-4:1.55) arc[start angle=-4, end angle=12, radius=1.55] -- cycle;
  \draw[bspRej!85!black, line width=0.4pt] (A2-cam) -- ++(-4:1.55);
  \draw[bspRej!85!black, line width=0.4pt] (A2-cam) -- ++(12:1.55);
  \node[bsp panel, anchor=north east] at ([xshift=-0.5cm]scene.north west) {(a)};

  \begin{axis}[bsp base, name=err,
      at={(scene.south west)}, anchor=north west, yshift=-0.12cm,
      width=0.80\columnwidth, height=2.75cm,
      xmin=0, xmax=8, ymin=-3.6, ymax=3.6,
      xtick={0,1,...,8}, ytick=\empty,
      xlabel={time [s]}, ylabel={along-deck error},
      legend columns=2,
      legend style={at={(0.5,-0.30)}, anchor=north, fill=none,
                    /tikz/every even column/.append style={column sep=8pt}},
      transpose legend]
    \bspshade{2}{5}{-3.6}{3.6}
    \draw[black!35, line width=0.4pt] (axis cs:0,0) -- (axis cs:8,0);
    \addplot[name path=Vhi, draw=none, forget plot] table[x=t, y=Vhi] {\bspdata/teaser.dat};
    \addplot[name path=Vlo, draw=none, forget plot] table[x=t, y=Vlo] {\bspdata/teaser.dat};
    \addplot[bsp estband] fill between[of=Vhi and Vlo];
    \addplot[bsp estedge] table[x=t, y=Vhi] {\bspdata/teaser.dat};
    \addplot[bsp estedge] table[x=t, y=Vlo] {\bspdata/teaser.dat};
    \addplot[name path=Phi, draw=none, forget plot, restrict x to domain=2:8]
      table[x=t, y=hi] {\bspdata/teaser.dat};
    \addplot[name path=Plo, draw=none, forget plot, restrict x to domain=2:8]
      table[x=t, y=lo] {\bspdata/teaser.dat};
    \addplot[bsp planband] fill between[of=Phi and Plo];
    \addplot[bsp plan, forget plot, restrict x to domain=2:8] table[x=t, y=m] {\bspdata/teaser.dat};
    \addplot[bsp truth, forget plot] table[x=t, y=e] {\bspdata/teaser.dat};
    \addplot[only marks, mark=|, mark size=2.2pt, bspAcc, line width=0.6pt, forget plot]
      table[x=t, y expr=-3.32] {\bspdata/teaser_acc.dat};
    \draw[bspPlan, line width=0.5pt] (axis cs:2,3.6) -- (axis cs:2,2.9)
      node[anchor=north, font=\scriptsize, text=bspPlan, inner sep=1pt] {$t_c$};
    \addlegendimage{bsp truth}
    \addlegendentry{realized error $e$}
    \addlegendimage{legend image code/.code={
        \fill[bspEst, opacity=0.22] (0cm,-0.09cm) rectangle (0.5cm,0.09cm);
        \draw[bspEst, dashed, line width=0.5pt] (0cm,0.09cm) -- (0.5cm,0.09cm);}}
    \addlegendentry{estimator report $0\pm2\sqrt{V}$}
    \addlegendimage{legend image code/.code={
        \fill[bspPlan, opacity=0.22] (0cm,-0.09cm) rectangle (0.5cm,0.09cm);
        \draw[bspPlan, line width=0.9pt] (0cm,0cm) -- (0.5cm,0cm);}}
    \addlegendentry{planner prediction $m\pm2\sqrt{P}$}
    \addlegendimage{only marks, mark=|, mark size=2.2pt, bspAcc, line width=0.6pt}
    \addlegendentry{accepted correction}
  \end{axis}
  \node[bsp panel, anchor=north east] at ([xshift=-0.5cm]err.north west) {(b)};
\end{tikzpicture}
    \caption{Predictable estimator error during a deck landing (illustrative one-dimensional deck-relative model, with command-blind estimation) (a) The vehicle conducts a flare maneuver, pitching the deck out of the camera's field of view between 2 and 5 s. (b) The estimator does not model the commanded flare, so its prediction accumulates error during the dropout while its modeled error distribution remains centered at zero with widening covariance \(V\) (gray). The planner issued the command and knows the estimator is command-ignorant, so predicts the evolution of estimator-error conditioned on its own information $\mathcal F$. Planner-predicted estimator error $m$ tracks the realized estimator error (black) at a much narrower covariance $P$ (blue). After visibility is restored, corrections (green) are accepted stochastically. In this example, accepted and uncorrected outcomes have different predicted error means, whose spread temporarily widens \(P\). Colors are consistent across figures, where gray represents estimator reports, blue represents planner-conditioned moments, green indicates correction acceptance, and shading for intervals with no available corrections.} 
    \label{fig:teaser}
\end{figure}

\subsection{Related work}

Closed loop uncertainty prediction and belief-space planning approaches are well established, though typically assume that a specified estimator model is consistent with the planning model \cite{platt2010, zotero-item-71}. This model uncertainty is centered on the modeled posterior mean and zero-mean estimator error, despite the potential existence of conditionally nonzero bias in the deployed estimator. Approaches like LQG-MP provide a framework for predicting joint dispersion between truth and the published estimates of a Kalman filter \cite{lqg}, though they do not model the planner-conditioned error of externally deployed estimators when the information sets of the planner and estimator differ.

Observation intermittency provides another source of uncertainty that complicates planning problems \cite{sinopoli2004, yan2022, piazzoni2024}. Relevant treatments include scalar information-approximation approaches, like smoothing sensing-discontinuities \cite{patil2014a} and measurement-probability information weighting \cite{rahman2021}, and approaches which retain acquisition randomness \cite{chaves2015, bopardikar}. While it is established that sensing geometry and random acquisition affect future uncertainty, accepted and rejected corrections yield different predicted error means, which contribute to additional covariance, changing the prospective task risk.

Various planning approaches consider modular estimators and estimator quality. Some employ learned future-error costs \cite{kayalibay} while others propagate perception uncertainty explicitly \cite{nagami2024,pampc}. Instead of propagating an error cost, we propagate conditional estimator-error, modeling the first and second moments of estimator error conditioned on the planner's information.

Our work builds upon prior analyses of estimator-controller information asymmetry \cite{trodden2023,schenato2007} and established machinery for mixture moment matching \cite{bar-shalom1975}, with which we develop a model of planner-conditioned estimator error, its effect on stochastic correction outcomes, a closed moment propagation of that error through correction, and a task risk-metric which enables a planner to reason about trajectory impacts on estimator performance.
\subsection{Contributions}

We propose a belief-space planning approach which derives a model of the first and second moments of estimator error. This model is used to propagate the estimator error along candidate trajectories. The planner predicts the probability that usable corrections will be available to the estimator and moment-matches the first two error moments over both correction and no-correction outcomes at each step.  This predicted error is then projected into task coordinates, from which the planner computes a task-weighted estimation risk metric. These predictions are then used by the receding-horizon planner to select trajectories. 

In this work, we make the following claims:

\begin{itemize}

    \item
    We show that, under different information sets, even estimators that are
    minimum mean-squared error  (MMSE)-optimal under their own information may have a nonzero
    planner-conditioned error mean, as established in
    Proposition~\ref{prop:planner_conditioned_error} and evaluated empirically
    through the error prediction tests of
    Section~\ref{sec:predictive_results};

    \item
    We develop a moment-matched recursion for estimator-error propagation
    under intermittent corrections, capturing the additional covariance
    induced by separation between the accepted and rejected error means that
    probability-based noise inflation omits, while recovering the familiar
    modified-Riccati recursion in the zero-mean limit;

    \item
    We formulate a task-weighted planner that penalizes predicted estimator-error
    moments along candidate trajectories, identify error models for three estimator architectures, and evaluate closed-loop ship-deck landing of a tilt-rotor VTOL in simulation.

\end{itemize}

We find that estimator error contains substantial planner-predictable structure when the estimator is ignorant of task-relevant information the planner holds. Prediction gains vary with forecast horizon and operating regime, though in 16 paired closed-loop trials, the proposed planner consistently improved median task performance compared to traditional covariance-only belief-space planning.


\section{Information Structure and Predictable Estimator Error}
\label{sec:information}

At some future time \(k\), let \(x_k\) denote the physical state
and \(u_k\) the applied command. The estimator reports an estimated state
\(\hat x_k\) and covariance \(V_k\), and its estimation error is
\[
    e_k = x_k\ominus\hat x_k,
\]
where \(\ominus\) is the difference operator on the relevant state manifold.
Let \(\mathcal H_k^E\) denote the \(\sigma\)-algebra generated by all
information available internally to the estimator at time \(k\). Each
published estimator output \(I_k\), including \(\hat x_k\), \(V_k\),
and any other quantities published by the estimator and used by the planner-side error model, is
\(\mathcal H_k^E\)-measurable.

The information available to the planner is
\begin{equation}
    \mathcal F_k
    =
    \sigma\!\left(I_{0:k},u_{0:k-1}\right).
    \label{eq:planner_information}
\end{equation}
The planner-conditioned estimator-error moments are
\begin{equation}
\begin{aligned}
    m_k &= \mathbb E[e_k\mid\mathcal F_k],\\
    P_k &= \operatorname{Cov}(e_k\mid\mathcal F_k).
\end{aligned}
\label{eq:error_moments}
\end{equation}
In general, \(P_k\) need not equal the covariance \(V_k\) reported by the
estimator.

\begin{proposition}[Planner-conditioned estimator error]
\label{prop:planner_conditioned_error}
Let
\[
    X=Y+BU,\qquad
    \hat X=\mathbb E[X\mid\mathcal H^E],\qquad
    e=X-\hat X,
\]
where \(X,Y\in L^2(\Omega;\mathbb R^n)\),
\(U\in L^2(\Omega;\mathbb R^p)\), and \(B\) is deterministic. Then
$
    \mathbb E[e\mid\mathcal G]=0
    \text{ for every }
    \mathcal G\subseteq\mathcal H^E.
$

If
$\sigma(U)
    \subseteq
    \mathcal F
    \subseteq
    \mathcal H^E\vee\sigma(U),
$
\(\mathbb E[U\mid\mathcal H^E]\) is \(\mathcal F\)-measurable, and
\[
    \mathbb E[Y\mid\mathcal H^E\vee\sigma(U)]
    =
    \mathbb E[Y\mid\mathcal H^E],
\]
then
\begin{equation}
    \mathbb E[e\mid\mathcal F]
    =
    B\left(
        U-\mathbb E[U\mid\mathcal H^E]
    \right).
    \label{eq:planner_conditioned_error}
\end{equation}
\end{proposition}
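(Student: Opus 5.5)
The plan has two parts, each using only the tower property and "taking out what is known."

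\emph{First claim.} Since $\hat X=\mathbb E[X\mid\mathcal H^E]$ and $X\in L^2$, the error satisfies $\mathbb E[e\mid\mathcal H^E]=\mathbb E[X\mid\mathcal H^E]-\hat X=0$. For any $\mathcal G\subseteq\mathcal H^E$, the tower property then gives $\mathbb E[e\mid\mathcal G]=\mathbb E[\mathbb E[e\mid\mathcal H^E]\mid\mathcal G]=0$.

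\emph{Second claim.} Write $\mathcal K=\mathcal H^E\vee\sigma(U)$, so that $\mathcal F\subseteq\mathcal K$, and first compute $\mathbb E[e\mid\mathcal K]$. By linearity and the fact that $B$ is deterministic, $\hat X=\mathbb E[Y\mid\mathcal H^E]+B\,\mathbb E[U\mid\mathcal H^E]$. This quantity is $\mathcal H^E$-measurable, hence $\mathcal K$-measurable, so
\[
\mathbb E[e\mid\mathcal K]=\mathbb E[Y\mid\mathcal K]+BU-\hat X,
\]
where $BU$ came out because $U$ is $\sigma(U)\subseteq\mathcal K$-measurable. The hypothesis $\mathbb E[Y\mid\mathcal K]=\mathbb E[Y\mid\mathcal H^E]$ cancels the $Y$ terms, leaving
\[
\mathbb E[e\mid\mathcal K]=B\left(U-\mathbb E[U\mid\mathcal H^E]\right).
\]
Next condition down to $\mathcal F$ by the tower property: $\mathbb E[e\mid\mathcal F]=\mathbb E\big[B(U-\mathbb E[U\mid\mathcal H^E])\mid\mathcal F\big]$. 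Since $\sigma(U)\subseteq\mathcal F$, $U$ is $\mathcal F$-measurable, and by hypothesis $\mathbb E[U\mid\mathcal H^E]$ is $\mathcal F$-measurable too. The integrand is therefore $\mathcal F$-measurable and equals its own conditional expectation, which yields \eqref{eq:planner_conditioned_error}.

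\emph{Where the care goes.} I expect no step to be a serious obstacle. The main points of attention are these:
\begin{itemize}
\item Integrability of every quantity: $L^2\subseteq L^1$ on a probability space, so all conditional expectations exist.
\item Identifying the correct intermediate $\sigma$-algebra $\mathcal K$, with $\mathcal F\subseteq\mathcal K$, so that the tower property applies.
\item Noting where each hypothesis enters: the conditional-independence-type condition on $Y$ is what removes the $Y$ contribution, and the measurability assumptions are what make the final expression exact rather than a further projection.
\end{itemize}
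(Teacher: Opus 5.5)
Your proof is correct and follows the same route as the paper: the MMSE property plus the tower property for the first claim, and the decomposition $\hat X=\mathbb E[Y\mid\mathcal H^E]+B\,\mathbb E[U\mid\mathcal H^E]$ followed by conditioning for the second. Your explicit passage through $\mathcal K=\mathcal H^E\vee\sigma(U)$ spells out the step the paper compresses into ``conditioning on $\mathcal F$ under the stated assumptions.'' It also shows why that intermediate $\sigma$-algebra is needed: $\mathbb E[Y\mid\mathcal H^E]$ need not itself be $\mathcal F$-measurable, so conditioning $X-\hat X$ on $\mathcal F$ directly would not immediately cancel the $Y$ terms.
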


\begin{proof}
The MMSE property gives
\(\mathbb E[e\mid\mathcal H^E]=0\), and the first claim follows from the
tower property. Moreover,
\[
    \hat X
    =
    \mathbb E[Y\mid\mathcal H^E]
    +
    B\mathbb E[U\mid\mathcal H^E].
\]
Conditioning \(X-\hat X\) on \(\mathcal F\) under the stated assumptions
then gives \eqref{eq:planner_conditioned_error}.
\end{proof}

Thus, an estimator that is MMSE-optimal given \(\mathcal H^E\) can still
have nonzero error mean conditioned on \(\mathcal F\) when the planner has
information relevant to state evolution that is not fully determined by
the estimator's information. Whether this occurs for a particular
estimator and operating regime is tested from data in
Section~\ref{sec:identification}.

Estimates are formed using observations of the environment that arrive stochastically. In part, this stochasticity arises from sensing geometry, with the probability of successful measurement decreasing at the observer's field of view boundaries. However, the observation of a measurement does not always lead to estimate correction. In cases where an estimator gates its accepted measurements on measurement innovation or the cadence in which measurements arrive, the probability that a measurement will be observed and that its observation will correct the estimator's published information may partially depend on factors beyond a planner's direct control \cite{or2026}. We call the sensing and application of a measurement to a state estimate a measurement acceptance. Measurement acceptance comprises one branch of a two-branch "tree" of possible sensing outcomes that the planner must consider at each potential future correction opportunity.


\section{A Planner-conditioned Model of Estimator Error}
\label{sec:error_model}
Throughout this formulation, let $c$ be the current planning time--the instant at which the planner forms a candidate control sequence and predicts forward--and $k$ denote a generic future discrete-time index along that candidate rollout. We use the subscript $k|c$ to denote the future time $k$, predicted using the information available at planning time $c$.

\subsection{Planner information}
\label{sec:interface}
The planner does not require access to the details of an estimator's internal belief representation or propagation mechanisms in order to predict its error. However, it does require the estimator to provide a mechanism by which to initialize the error moment predictors,  a model for between-correction error flow, the effect of accepted corrections on the estimator's predictions, the probability that a correction is accepted by the estimator, and the range over which the estimator's behavior is considered valid.

\subsection{Initialization and compensation}
\label{sec:compensation}
At the beginning of planning time $c$, the estimator-error moments are initialized using the set of all previously published estimator information $(m_c,P_c) = \mathcal I_E(I_{0:c})$ using some declared initialization procedure $\mathcal I_E$. The planner uses this initialized error prediction to compensate the estimator's published state, yielding a compensated planner-side state estimate $\check x_c$, such that

\begin{equation}
    \check x_c
    =
    \hat x_c \oplus m_c.
    \label{eq:compensation}
\end{equation}
where $\oplus$ is the retraction operator on the relevant manifold.

\subsection{Estimator-error propagation between corrections}
\label{sec:error_flow}

We model estimator-error propagation between observation-based corrections
over integration interval \(\delta\) as
\begin{equation}
    e_{k+1}^-
    =
    F e_k^+
    +
    \delta\left(
        b+B\phi_k+\Lambda_z z_k
    \right)
    +
    \eta_k,
    \qquad
    F=I+\delta A,
    \label{eq:error_flow}
\end{equation}
where \(A\) and \(b\) characterize autonomous error evolution,
\(\phi_k\) collects candidate-dependent features available to the planner,
and \(z_k\) represents the retained temporally correlated residual modes.
The remaining residual satisfies
\[
    \mathbb E[\eta_k\mid\mathcal F_k]=0.
\]

Let \(r_k\in[0,1]^{n_r}\) denote correction recency, with
$
    \bar r_k
    =
    \mathbb E[r_k\mid\mathcal F_k],$ and $
    \Sigma_k^{rr}
    =
    \operatorname{Cov}(r_k\mid\mathcal F_k)$.
Its contribution to the remaining flow uncertainty is
\begin{equation}
\begin{aligned}
    \operatorname{Cov}(\eta_k\mid\mathcal F_k)
    =
    \delta\Bigg[
        Q_0
        +
        \sum_{j=1}^{n_r}
        \Big(
            (1-\bar r_{j,k})Q_j^{(1)}
            \\
            +
            \big[
                (1-\bar r_{j,k})^2
                +
                \Sigma^{rr}_{jj,k}
            \big]
            Q_j^{(2)}
        \Big)
    \Bigg],
\end{aligned}
\label{eq:flow_covariance}
\end{equation}
where $Q_0$ is the baseline residual uncertainty, $Q_j^{(1)}$ is the uncertainty which increases linearly with the staleness of correction channel $j$, and $Q_j^{(2)}$ is the residual uncertainty that increases quadratically with staleness.
The quadratic term uses the full second-moment of stochastic recency, since $\E[(1-r)^2|\mathcal F_k] = (1-\bar r_j)^2 + \Sigma_{jj,k}^{rr}$

The colored residual is represented by the stable autoregressive model
$
    z_{k+1}
    =
    R_z z_k+\epsilon_k^z$ for 
    $\rho(R_z)<1,$ with centered innovation \(\epsilon_k^z\), where $\rho(R_z)<1$ is a constraint to ensure the stable decay of the latent residual state. Between corrections, recency
evolves according to
\begin{equation}
    r_{k+1}^-
    =
    D_r r_k^+.
    \label{eq:recency_flow}
\end{equation}

Define the augmented recency-error state
$
    s_k
    =
    [
        e_k^\top, 
        r_k^\top, 
        z_k^\top
    ]^{\top}.
$
For a fixed candidate command sequence, the between-correction estimator-error propagation model can
then be written in affine form as
\begin{equation}
    s_{k+1}^-
    =
    \mathcal A_k s_k^+
    +
    c_k
    +
    \omega_k,
    \label{eq:augmented_error_flow}
\end{equation}
where $\mathbb E[\omega_k\mid\mathcal F_k]=0,$ and $\operatorname{Cov}(\omega_k\mid\mathcal F_k)=\Omega_k.$
Its first two moments propagate as
\begin{equation}
\begin{aligned}
    \mu_{k+1}^-
    &=
    \mathcal A_k\mu_k^+
    +
    c_k,
    \\
    \Sigma_{k+1}^-
    &=
    \mathcal A_k\Sigma_k^+\mathcal A_k^\top
    +
    \Omega_k,
\end{aligned}
\label{eq:augmented_moment_flow}
\end{equation}
and the estimator-error moments \(m_k\) and \(P_k\) are the \(e\)-blocks of
\(\mu_k\) and \(\Sigma_k\), respectively.
\subsection{Stochastic corrections}
\label{sec:correction_model}

Let \(a_{k|c}\in\{0,1\}\) denote whether a usable correction occurs at
rollout step \(k\). We model $a_{k|c}\sim\operatorname{Bernoulli}(q_{k|c})$
with
\begin{equation}
    q_{k|c}
    =
    \mathcal Q(
        \bar x_{k|c},
        r_{k|c},
        I_c
    ).
    \label{eq:correction_probability}
\end{equation}
Here \(\mathcal Q\) is a planner-side correction-probability model accounting
for sensing cadence, trajectory-dependent sensing geometry, and conditional
estimator acceptance. 

Define the error of the accepted-correction branch as
\begin{equation}
    e^{\rm acc} = Je^- + \epsilon^c, \qquad \E[\epsilon^c|\mathcal F] = 0, \quad \operatorname{Cov}(\epsilon^c|\mathcal F) = W.
    \label{eq:accepted_correction}
\end{equation}
for error-transition map $J$ and remaining correction error $\epsilon^c$. For linear updates, $J = I-KH$, where $K$ is the correction gain and $H$ is the measurement Jacobian. When observations are accepted, the recency component associated with an accepted measurement type is reset to $r^+_j=1$, while it continues decaying in the no-observation branch.


\section{Moment Propagation Under Intermittent Corrections}
\label{sec:intermittent_corrections}

Under intermittent observations of the task, future observation-based estimation corrections have state-dependent stochasticity. At each $t_k$, either an observation can be accepted with probability $q$, or the estimate can remain uncorrected and error can continue to propagate under the between-correction flow model. While branching each candidate over the possible observation outcomes would enable consideration of this stochasticity, this approach is computationally intractable for real-time control problems. We instead show that the first two moments of planner-conditioned estimator-error under stochastic observations can be matched exactly over the correction outcomes. When the prior of the predicted mean error is nonzero, this moment-matching introduces an additional covariance term, distinguishing our approach from other zero-mean treatments that rely instead on scalar noise-inflation.

\subsection{Rollout assumptions}
\label{sec:rollout_assumptions}
We declare the following assumptions under which we formulate our moment-matching approach.
\begin{assumption}[Rollout closure]
\label{ass:rollout_closure}
For each candidate rollout:
\begin{enumerate}
    \item The local flow and correction models are evaluated only within
    their validated range \(\mathcal V_E\).

    \item Correction occurrence is conditionally independent of the
    realized estimator error given planner information,
    \[
        \Prob(
            a_k=1
            \mid
            e_k^-,
            \mathcal F_k
        )
        =
        \Prob(
            a_k=1
            \mid
            \mathcal F_k
        )
        =
        q_k.
    \]
    \item Candidate command sequences are evaluated open loop between
    replanning updates.
\end{enumerate}
\end{assumption}
\subsection{Moment matching over correction outcomes}
\label{sec:moment_matching}

Under the stochastic-acceptance model of Section~\ref{sec:correction_model}, let event $a_k = 1$ denote correction acceptance at time $k$. Likewise, $a_k = 0$ denotes rejection. The first and second moments of estimator-error are defined for each branch as
\[
\begin{aligned}
    m_{\rm rej}
    &=
    \E[e^+\mid a=0,\mathcal F],
    &
    P_{\rm rej}
    &=
    \Cov(e^+\mid a=0,\mathcal F),
    \\
    m_{\rm acc}
    &=
    \E[e^+\mid a=1,\mathcal F],
    &
    P_{\rm acc}
    &=
    \Cov(e^+\mid a=1,\mathcal F).
\end{aligned}
\]
An exact first-two-moment closure of the branches arises from the laws of total expectation and covariance, yielding the posterior error correction model:

\begin{equation}
\begin{aligned}
    m^+
    &=
    (1-q)m_{\rm rej}
    +
    q m_{\rm acc},
    \\
    P^+
    &=
    (1-q)P_{\rm rej}
    +
    qP_{\rm acc}
    \\
    &\quad+
    q(1-q)
    (m_{\rm acc}-m_{\rm rej})
    (m_{\rm acc}-m_{\rm rej})^\top .
\end{aligned}
\label{eq:mixture_moments}
\end{equation}
The first two covariance terms model within-branch uncertainty, and the final term models the spread between the branch means. This final covariance term is zero precisely when the accepted and rejected branch means coincide and $0<q<1$.
\subsection{Effect of predictable estimator error}
\label{sec:biased_correction}

We apply this mixture to the error model established in Section~\ref{sec:correction_model}. In particular, define the $a_k=0$ branch moments $m_{\rm rej}=m^-$ and $P_{\rm rej}=P^-$. The $a_k=1$ branch is defined by \eqref{eq:accepted_correction}, with resulting branch moments $m_{\rm acc}=Jm^-$ and $P_{\rm acc}=JP^-J^\top+W$.

\begin{proposition}[Intermittent correction of biased error]
\label{prop:correction_gap}

Let an accepted correction occur with probability \(q\in(0,1)\) under
Assumption~\ref{ass:rollout_closure}, and define
$
    v=(J-I)m^-.
$
Then the post-event estimator-error mean is
\begin{equation}
    m^+
    =
    \big[(1-q)I+qJ\big]m^-,
    \label{eq:intermittent_mean}
\end{equation}
and its covariance is
\begin{equation}
    P^+(m^-)
    =
    P^+(0)
    +
    q(1-q)vv^\top,
    \label{eq:mean_induced_spread}
\end{equation}
where
$
    P^+(0)
    =
    (1-q)P^-
    +
    q\left(JP^-J^\top+W\right).$

\end{proposition}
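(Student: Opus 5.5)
The proof substitutes the branch moments into the mixture closure \eqref{eq:mixture_moments} and simplifies. The first task is to justify those branch moments under Assumption~\ref{ass:rollout_closure}. The rejection branch leaves the error untouched, so $e^+=e^-$ on $\{a=0\}$. By item~2 of the assumption, $a$ is conditionally independent of $e^-$ given $\mathcal F$. Conditioning on $a=0$ therefore does not change the law of $e^-$, which gives $m_{\rm rej}=m^-$ and $P_{\rm rej}=P^-$.

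On $\{a=1\}$ we have $e^+=Je^-+\epsilon^c$ by \eqref{eq:accepted_correction}. The same independence gives $\E[Je^-\mid a=1,\mathcal F]=Jm^-$ and $\Cov(Je^-\mid a=1,\mathcal F)=JP^-J^\top$. To obtain $P_{\rm acc}=JP^-J^\top+W$ I additionally need the correction error $\epsilon^c$ to be conditionally centered with covariance $W$ on the accepted branch and uncorrelated with $e^-$. I will take this as part of the model \eqref{eq:accepted_correction}, as the paper implicitly does when it states the branch moments. This is the only step needing care: it is a modelling hypothesis rather than a derivation, and I will state it explicitly.

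The mean then follows directly from \eqref{eq:mixture_moments}:
\[
m^+=(1-q)m^-+qJm^-=\big[(1-q)I+qJ\big]m^-,
\]
which is \eqref{eq:intermittent_mean}.

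For the covariance, substituting the branch moments into \eqref{eq:mixture_moments} gives the within-branch part
\[
(1-q)P^-+q\left(JP^-J^\top+W\right).
\]
This part does not depend on $m^-$, so it equals $P^+(0)$; setting $m^-=0$ makes the spread term vanish, which confirms the identification. The spread term involves the difference of branch means,
\[
m_{\rm acc}-m_{\rm rej}=Jm^--m^-=(J-I)m^-=v,
\]
so it equals $q(1-q)vv^\top$. Adding the two parts yields \eqref{eq:mean_induced_spread}.

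The restriction $q\in(0,1)$ is needed only so that both conditional branches are well defined; at the endpoints the spread term vanishes trivially.
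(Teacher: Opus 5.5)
Your proposal is correct and follows the paper's proof, which likewise substitutes $m_{\rm rej}=m^-$, $P_{\rm rej}=P^-$, $m_{\rm acc}=Jm^-$, and $P_{\rm acc}=JP^-J^\top+W$ into \eqref{eq:mixture_moments} and simplifies. You also justify these branch moments explicitly, via item~2 of Assumption~\ref{ass:rollout_closure} and the stated hypothesis that $\epsilon^c$ is centered on the accepted branch and uncorrelated with $e^-$ (the paper's $S^c=0$, which it names only in the zero-mean limit), and this fills in what the paper leaves implicit.
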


\begin{proof}
Substituting $m_{\rm rej} = m^-$, $P_{\rm rej} = P^-$, $m_{\rm acc} = Jm^-$, and $P_{\rm acc} = JP^-J^\top+W$ into \eqref{eq:mixture_moments} gives \eqref{eq:intermittent_mean}--\eqref{eq:mean_induced_spread}. 
\end{proof}
Therefore, in general, scalar inflation alone cannot reconstruct the stochastic-acceptance uncertainty of \eqref{eq:mixture_moments}.
\begin{figure}[t]
    \centering
\begin{tikzpicture}
  \begin{axis}[bsp base,
      width=0.80\columnwidth, height=3.3cm,
      xmin=-0.6, xmax=2.5, ymin=-1.3, ymax=3.5,
      xtick={0,0.5,...,2.5}, ytick={0,1,2},
      xlabel={time since deck reacquired [s]}, ylabel={estimator error $e$},
      unbounded coords=jump,
      legend columns=2,
      legend style={at={(0.5,-0.36)}, anchor=north, fill=none,
                    /tikz/every even column/.append style={column sep=8pt}}]
    \bspshade{-0.6}{0}{-1.3}{3.5}
    \addplot[name path=hi, draw=none, forget plot] table[x=t, y=hi] {\bspdata/correction_futures.dat};
    \addplot[name path=lo, draw=none, forget plot] table[x=t, y=lo] {\bspdata/correction_futures.dat};
    \addplot[draw=none, fill=bspPlan, fill opacity=0.18, forget plot] fill between[of=hi and lo];
    \pgfplotsinvokeforeach{1,...,10}{
      \addplot[bspRej, line width=0.45pt, opacity=0.85, forget plot]
        table[x=t, y=u#1] {\bspdata/correction_futures.dat};
      \addplot[bspAcc, line width=0.45pt, opacity=0.85, forget plot]
        table[x=t, y=c#1] {\bspdata/correction_futures.dat};
    }
    \addplot[bspPlan, line width=1.0pt, forget plot] table[x=t, y=m] {\bspdata/correction_futures.dat};
    \pgfplotsinvokeforeach{hit,lot}{
      \addplot[bspInfl, line width=1.0pt, dash pattern=on 3pt off 1.5pt, forget plot]
        table[x=t, y=#1] {\bspdata/correction_futures.dat};
    }
    \addplot[only marks, mark=|, mark size=1.8pt, black!50, line width=0.5pt, forget plot]
      table[x=t, y expr=-1.18] {\bspdata/correction_opps.dat};
    \node[bsp label, anchor=north, text=bspRej!70!black] at (axis cs:-0.3,3.45) {dropout};
    \node[bsp label, anchor=west, text=bspRej!75!black] at (axis cs:0.8,2.75) {not yet corrected};
    \node[bsp label, anchor=center, text=bspAcc!75!black] at (axis cs:1.75,-0.76) {corrected};
    \addlegendimage{legend image code/.code={
        \fill[bspPlan, opacity=0.18] (0cm,-0.1cm) rectangle (0.5cm,0.1cm);
        \draw[bspPlan, line width=1.0pt] (0cm,0cm) -- (0.5cm,0cm);}}
    \addlegendentry{exact moments $m\pm2\sqrt{P}$}
    \addlegendimage{legend image code/.code={
        \draw[bspInfl, line width=1.0pt, dash pattern=on 3pt off 1.5pt] (0cm,0.1cm) -- (0.5cm,0.1cm);
        \draw[bspInfl, line width=1.0pt, dash pattern=on 3pt off 1.5pt] (0cm,-0.1cm) -- (0.5cm,-0.1cm);}}
    \addlegendentry{noise inflation $R\mapsto R/q$}
    \addlegendimage{only marks, mark=|, mark size=1.8pt, black!50, line width=0.5pt}
    \addlegendentry{correction opportunity}
  \end{axis}
\end{tikzpicture}
    \caption{Intermittent correction of biased error (Proposition \ref{prop:correction_gap}) using a scalar example with $q=0.5$ and $J=0.2$. Sensor loss leaves the estimator-error with a predictable mean $m^-\approx 2$. Once observation is restored, each correction opportunity (ticks) is accepted with probability $q$, so the possible futures branch into those already corrected (green) and those not yet corrected (orange). The exact moments (blue) cover both groups. Additional covariance spread $P$ is induced by $q(1-q)vv^\top$, which vanishes for $m^- = 0$. Noise inflation $R\mapsto R/q$ (dashed band) treats each opportunity as a weaker but certain correction, so it falls between the groups. After the first correction opportunity, its $2\sigma$ band contains under 2\% of possible futures, against over 99\% for exact moments. Ten representative paths of $2\times10^4$ simulations are drawn.}
    \label{fig:correction}
\end{figure}

\subsection{Zero-mean limit}
\label{sec:zero_mean_special_case}
Setting \(m^-=0\) in Proposition~\ref{prop:correction_gap} gives
\begin{equation}
\begin{aligned}
     m^+&=0,
    \\
    P^+
    &=
    (1-q)P^-
    +
    q\left(
        JP^-J^\top
        +W
    \right).
\end{aligned}
    \label{eq:zero_mean_special_case}
\end{equation}For a linear-Gaussian correction with
$
    J=I-KH$, $
    W=KRK^\top$, $
    S^c=0$, $P^- = V^-$,
and
$
    K
    =
    V^-H^\top
    \left(
        HV^-H^\top+R
    \right)^{-1},
$
\eqref{eq:zero_mean_special_case} reduces to
\begin{equation}
    P^+
    =
    P^-
    -
    qP^-H^\top
    \left(
        HP^-H^\top+R
    \right)^{-1}
    HP^- .
    \label{eq:modified_riccati_update}
\end{equation}

This is precisely a modified-Riccati update, using the familiar intermittent-measurement correction \cite{sinopoli2004}. Our proposed propagation therefore contains the conventional zero-mean recursion as the special case for $m^- = 0$, with additional terms appearing only when the rejected and accepted branches have different predicted means.

\subsection{Comparison with probability-based noise inflation}
\label{sec:noise_inflation_comparison}

For the stochastic correction, define the exact expected correction map
$\bar J \triangleq (1-q)I+qJ = I-qKH$, hence
$m^+ = \bar Jm^-$. Under noise-inflated update $R\mapsto R/q$, define
$\widetilde J\triangleq I-\widetilde KH$. Probability-based noise inflation \(R\mapsto R/q\) gives
\begin{equation}
    \widetilde K
    =
    qP^-H^\top
    \left(
        qHP^-H^\top+R
    \right)^{-1}.
    \label{eq:noise_inflated_gain}
\end{equation}
For \(0<q<1\), \(P^-\succ0\), \(R\succ0\), and full-row-rank \(H\),
\begin{equation}
    \widetilde K\neq qK,
    \qquad
    \widetilde J\neq\bar J.
    \label{eq:gain_mismatch}
\end{equation}
For \eqref{eq:gain_mismatch}, equality
\(\widetilde K=qK\) would imply $(1-q)HP^-H^\top=0$,
which contradicts \(HP^-H^\top\succ0\).
Its covariance update is similarly
\begin{equation}
    \widetilde P^+
    =
    \widetilde J P^-\widetilde J^\top
    +
    \widetilde K
    \left(\frac{R}{q}\right)
    \widetilde K^\top,
    \label{eq:noise_inflated_covariance}
\end{equation}
which is independent of $m^-$. In contrast,
\begin{equation}
    P^+(m^-)-P^+(0)
    =
    q(1-q)
    (J-I)m^-
    (m^-)^\top
    (J-I)^\top .
    \label{eq:bias_dependent_covariance}
\end{equation}

Therefore, since noise inflation generally mismatches the expected mean update and its covariance update is bias-independent, it cannot represent bias-dependent recursion for an arbitrary $m^-$.
\subsection{Recursive rollout closure}
\label{sec:recursive_closure}
The proposed stochastic-correction recursion remains closed for the augmented error-recency state $\chi_k = [e_k, r_k]^\top$. The corresponding branch-conditioned moments are $
    (\mu_{\chi,\rm rej},\Sigma_{\chi,\rm rej})
    $ and $
    (\mu_{\chi,\rm acc},\Sigma_{\chi,\rm acc}),
$ which propagate according to
\begin{equation}
\begin{aligned}
    \mu_\chi^+
    &=
    (1-q)\mu_{\chi,\rm rej}
    +
    q\mu_{\chi,\rm acc},
    \\
    \Sigma_\chi^+
    &=
    (1-q)\Sigma_{\chi,\rm rej}
    +
    q\Sigma_{\chi,\rm acc}
    \\
    &\quad+
    q(1-q)
    (\mu_{\chi,\rm acc}-\mu_{\chi,\rm rej})
    (\mu_{\chi,\rm acc}-\mu_{\chi,\rm rej})^\top .
\end{aligned}
\label{eq:joint_mixture_moments}
\end{equation}

\section{Task-Weighted Estimation Risk and Receding-Horizon Planning}
\label{sec:planning}

\subsection{Task projection}
\label{sec:task_projection}
Define the projection of estimator-error into task-relevant coordinates as
\begin{equation}
    \delta y_k
    \simeq
    G_k e_k,
    \label{eq:task_projection}
\end{equation}
where $G$ is the Jacobian which maps estimator-coordinates into task-space. Let $\tau_i,i\in [1,n_y]$ be the set of tolerances on each task axis, which defines the successful task-completion set. The task-normalized cost metric can be constructed as:

\begin{equation}
    M
    =
    \operatorname{diag}
    \left(
        \tau_1^{-2},
        \ldots,
        \tau_{n_y}^{-2}
    \right).
    \label{eq:tolerance_metric}
\end{equation}

\subsection{Task-weighted estimation risk}
\label{sec:estimation_risk}
Compensation of the current physical nominal does not set $m_k=0$, and so $m_k$ remains the predicted mean of the external estimator error, determining future correction outcomes and their task-space effect. 

Projecting the planner-conditioned error-moments into task-space enables the computation of task-normalized estimation risk. Let the expected task-normalized estimation risk metric be defined as $\mathcal R_E \triangleq \E[\lVert G_ke_k\rVert^2_M | \mathcal F_k]$. Hence
\begin{equation}
    \mathcal R_E(m_k,P_k)
    =
    (G_km_k)^\top M(G_km_k)
    +
    \operatorname{tr}
    \left(
        MG_kP_kG_k^\top
    \right).
    \label{eq:estimation_risk}
\end{equation}
This risk metric can be decomposed, such that $\mathcal R_E = \mathcal R_m + \mathcal R_P$ for the predictable-bias risk $\mathcal R_m = \lVert G_km_k\rVert^2_M$ and covariance risk $\mathcal R_P = \tr(MG_kP_kG_k^\top)$. This identity is exact for the supplied moments $(m,P)$ and fixed, $\mathcal F_k$-measurable projection $G_k$, requiring only finite second moments.

\subsection{Receding-horizon planning objective}
\label{sec:planning_objective}
Let $\mathbf u_c
    =
    \left(
        u_{c|c},
        \ldots,
        u_{c+N-1|c}
    \right)
$
be the set of candidate controls from which the planner chooses. The planner evaluates candidates through open-loop recursion. Define the planner's compensated state belief  $\bar x_{\ell+1|c} = f(\bar x_{\ell|c},u_{\ell|c})$ and corresponding correction-probabilities evolution as $q_{\ell|c} = \mathcal Q(\bar x_{\ell|c}, r_{\ell|c}, I_c)$. Through bias-prediction, the first and second moments propagate according to $(m_{\ell+1|c},P_{\ell+1|c}) = \mathcal B(m_{\ell|c},P_{\ell|c},u_{\ell|c},q_{\ell|c})$.

 Define the estimator-risk cost given the selected control as
\begin{equation}
    J_E(\mathbf u_c)
    = \sum_{k=c}^N\mathcal R_E(m_{k|c},P_{k|c})
\end{equation}
Candidate controls affect $J_E$ through both error flow and $q$

We define the optimal candidate selection as 
\begin{equation}
    \mathbf u_c^\star
    =
    \arg\min_{\mathbf u_c\in\mathbb U_c}
    \left[
        J_{\rm nom}(\mathbf u_c)
        +
        w_E J_E(\mathbf u_c)
    \right].
    \label{eq:planning_problem}
\end{equation}
where $J_{\rm nom}$ prices the error-compensated mean state. The first segment of the optimal policy $u_c=u_{c|c}^\star$ is applied, and the replanning closes the loop at the next estimator report.
\begin{figure}[t]
    \centering
\begin{tikzpicture}
  \begin{groupplot}[bsp base,
      group style={group size=2 by 2, horizontal sep=0.22cm, vertical sep=0.16cm,
                   xticklabels at=edge bottom, xlabels at=edge bottom,
                   yticklabels at=edge left, ylabels at=edge left},
      width=0.385\columnwidth,
      xmin=0, xmax=3.5, xtick={0,1,2,3},
      xlabel={rollout time [s]}]
    \nextgroupplot[height=0.85cm, ymin=-1.15, ymax=0.15, ytick={-1,0},
                   ylabel={$u$\,[m/s$^2$]},
                   title={A: brake during dropout}]
      \bspshade{1.5}{2.5}{-1.15}{0.15}
      \addplot[black!75, line width=0.7pt] table[x=t, y=u] {\bspdata/planning_A.dat};
    \nextgroupplot[height=0.85cm, ymin=-1.15, ymax=0.15, ytick={-1,0},
                   title={B: brake before dropout}]
      \bspshade{1.5}{2.5}{-1.15}{0.15}
      \addplot[black!75, line width=0.7pt] table[x=t, y=u] {\bspdata/planning_B.dat};
    \nextgroupplot[height=2.0cm, ymin=0, ymax=2.75, ytick={0,1,2},
                   ylabel={risk $\mathcal R_E$}]
      \bspshade{1.5}{2.5}{0}{2.75}
      \addplot[name path=Rm, draw=none, forget plot] table[x=t, y=Rm, each nth point=2] {\bspdata/planning_A.dat};
      \addplot[name path=RE, bsp plan, forget plot] table[x=t, y=RE, each nth point=2] {\bspdata/planning_A.dat};
      \path[name path=ax] (axis cs:0,0) -- (axis cs:3.5,0);
      \addplot[draw=none, fill=bspPlan, fill opacity=0.65, forget plot] fill between[of=Rm and ax];
      \addplot[bsp planband] fill between[of=RE and Rm];
      \addplot[bsp infl, forget plot] table[x=t, y=Rzero, each nth point=2] {\bspdata/planning_A.dat};
    \nextgroupplot[height=2.0cm, ymin=0, ymax=2.75, ytick={0,1,2},
                   legend style={at={(0.03,0.97)}, anchor=north west, fill=none}]
      \bspshade{1.5}{2.5}{0}{2.75}
      \addplot[name path=Rm, draw=none, forget plot] table[x=t, y=Rm, each nth point=2] {\bspdata/planning_B.dat};
      \addplot[name path=RE, bsp plan, forget plot] table[x=t, y=RE, each nth point=2] {\bspdata/planning_B.dat};
      \path[name path=ax] (axis cs:0,0) -- (axis cs:3.5,0);
      \addplot[draw=none, fill=bspPlan, fill opacity=0.65, forget plot] fill between[of=Rm and ax];
      \addplot[draw=none, fill=bspPlan, fill opacity=0.22, forget plot] fill between[of=RE and Rm];
      \addlegendimage{area legend, draw=bspPlan!70!black, line width=0.4pt, fill=bspPlan, fill opacity=0.65}
      \addlegendentry{$\mathcal R_m$ (predictable bias)}
      \addlegendimage{area legend, draw=bspPlan, line width=0.9pt, fill=bspPlan!22}
      \addlegendentry{$\mathcal R_P$ (covariance)}
      \addplot[bsp infl] table[x=t, y=Rzero, each nth point=2] {\bspdata/planning_B.dat};
      \addlegendentry{covariance-only, $m\equiv0$}
  \end{groupplot}
\end{tikzpicture}
    \caption{Candidate commands can affect estimator error despite identical sensing (illustrative one-dimensional task-relative model). Both candidates lose observability over the same interval (shaded), so their correction probabilities $q$ are identical. Both candidates also command the same total deceleration, with candidate A braking during sensing loss and B braking before. A covariance-only evaluation ($m \equiv 0$, dashed), typical of Gaussian belief-space planning approaches, predicts the same risk for both trajectories. The full model attributes predictable estimator error to A's unobserved maneuvering, so A's risk, in tolerance-normalized units according to \eqref{eq:estimation_risk}, is dominated by $\mathcal R_m$ (dark), peaking at 2.5 $\times$ that of trajectory B.  }
    \label{fig:planning}
\end{figure}

\subsection{Cross-entropy optimization}
\label{sec:cem_planning}

We address this planning problem using a Cross-Entropy-based receding-horizon formulation \cite{gonzales, marin}.

When the estimator outputs information $I_k$, error moments $(m_c, P_c)$ are initialized, and the planner compensates the estimates according to \eqref{eq:compensation}. Candidate command sequences are then sampled from a distribution. For each command tape, physical nominal dynamics and estimator-error moments are propagated, with which $\mathcal Q$, $G_k$, $M$, and $\mathcal R_\mathrm{E}$ are evaluated. The objective of \eqref{eq:planning_problem} is then evaluated, and the minimum-cost trajectory candidate is selected. The first command segment of this trajectory is applied, the candidate distribution is shifted for the next sampling pass, and the algorithm is repeated from the next public report.
\section{Identification and Validation}
\label{sec:identification}
A model may be constructed analytically, using known error dynamics of the system, or identified using data generated from source-trajectories. A fitted model must pass both development, selection, and closed-loop testing conditions to be considered a fully qualified error model.

\subsection{Initialization and mean-error identification}
\label{sec:mean_identification}

At update $k$, let $\nu_k$ be the estimator's public innovation. Initializing $m_c$ from a causal affine map of the most recent $L$ innovations available at planning time $c$, the coefficients $a_0$ and $A_i^\nu$ are fit offline on development data, to satisfy the equation:
\begin{equation}
     m_c
    =
    a_0
    +
    \sum_{i=0}^{L-1}
    A_i^\nu \nu_{c-i},
    \label{eq:initializer_fit}
\end{equation}

Define the between-correction rate-of-change of the estimator error

\begin{equation}
    \Delta_k
    \triangleq
    \frac{
        e_{k+1}^- - e_k^+
    }{\delta}.
    \label{eq:flow_target}
\end{equation}

as the regression target for parameter fitting, excluding correction jumps from the fit. These correction jumps are handled separately by $J$, $W$, and $S^c$.

The rate-model coefficients are fit to this target accordingly:

\begin{equation}
    \Delta_k
    =
    A e_k^+
    +
    b
    +
    B\phi_k
    +
    \Lambda_z z_k
    +
    \varepsilon_k^{\rm fit},
    \label{eq:flow_fit}
\end{equation}
which directly implies \eqref{eq:error_flow} under forward Euler. In particular, $Ae_k^+$ is the model dependence on current estimator error, $b$ is the autonomous drift coefficient, $B\phi_k$ is the dependence predictable from planner-available features (e.g. command-related and recency-related features), $\Lambda_zz_k$ is the temporally correlated residual structure, and $\epsilon_k$ is the remaining, unexplained fitting residual.

\subsection{Residual covariance}
\label{sec:uncertainty_identification}
After fitting the mean error dynamics model, we form the one-step residual $\widehat\eta_k
    =
    e_{k+1}^-
    -
    F e_k^+
    -
    \delta
    \left(
        b+B\phi_k+\Lambda_z z_k
    \right),
    $ by subtracting the model's predicted error increment from the observed between-correction transition.
 The recency covariance terms are fit to the covariance of this residual, modeled by \eqref{eq:flow_covariance}. The $Q^{(i)}$ terms are fit using between-correction transitions from development data, targeting the empirical second moments $\hat \eta_k \hat\eta_k^\top$ and regressors $1$, $(1-r_{j,k})$, and $(1-r_{j,k})^2$ subject to positive-semidefiniteness constraints.
\subsection{Correction and acceptance identification}
\label{sec:correction_identification}

For accepted-correction events, $J_k$ is taken directly from the estimator's public update map when available. Otherwise, it is identified from paired pre- and post- correction errors by weighted least squares $\widehat J
=
\arg\min_J
\sum_{k}
\left|
e_k^{\rm acc}-Je_k^-
\right|_2^2$. Using $\widehat\epsilon_k^c
=
e_k^{\rm acc}
-
\widehat J_k e_k^-,$ the correction-noise covariance and pre-correction cross-covariance are estimated from the residual second moments. These moments are estimated within the same operating regimes used by the selected error model.

The parameters of the correction-probability mapping $\mathcal Q$ are fit from the binary labels $a_k$ using the candidate-correction samples $\widehat\theta_q
=
\arg\min_{\theta_q}
-
\sum_{k}
[
a_k\log q_k(\theta_q)
+
(1-a_k)\log\left(1-q_k(\theta_q)\right)]$. Known cadence or geometric components of $\mathcal Q$ are supplied analytically. Only unidentified components are estimated from data.


\section{SHIPBOARD LANDING: MODEL IDENTIFICATION AND CLOSED-LOOP EVALUATION}
We instantiate the surrogate planning problem for landing a T1 Ranger
vertical-takeoff-and-landing vehicle on a ship deck.
The VTOL is represented by a 20-component state comprising position, unit
quaternion, translational velocity, angular velocity, four aerodynamic-surface
joint positions, and three thrust states. The seven control inputs comprise four
joint-rate commands and three throttle commands. The vehicle dynamics are
evaluated using an aerodynamic rigid-body model with actuator states. We then identify predictive error models for, and consider planner performance across, three independent estimators: a truth-driven estimator error-emulator, an inertial Extended Kalman Filter (EKF), and an inertial Gaussian-sum filter. 

Prior work on ship-deck recovery problems assume usable state estimates and address the control and visual servoing problem presented to the planner \cite{cho2022, herisse2012, falanga2017}. We seek to investigate what happens to the estimate when the deck target leaves the sensors' fields of regard, and how the planner should respond.

\subsection{Shipboard task, dynamics, sensing, and success}

Define \(W\), \(B\), and \(D\) as the world, VTOL-body, and deck frames,
respectively, and let \(R_{AB}\) map coordinates from frame \(B\) into frame
\(A\). The deck-relative state is defined by
\begin{equation}
\begin{aligned}
p
&=R_{DW}(p_B^W-p_D^W),
&
R
&=R_{DW}R_{WB},
\\
v
&=R_{DW}(v_B^W-v_D^W)-\omega_{WD}^{D}\times p,
&
\omega
&=R\omega_{WB}^{B}-\omega_{WD}^{D}.
\end{aligned}
\label{eq:deck_coords}
\end{equation}
Thus, \(v=\dot p\) includes the rotating-deck-frame contribution. We denote
the resulting relative state by \(\zeta_D=(p,R,v,\omega)\).

The aircraft state is
\(x_A=(p_B^W,q_{WB},v_B^W,\omega_{WB}^{B},\theta,T)\in\mathbb R^{20}\),
where \(\theta\in\mathbb R^4\) contains the joint positions and
\(T\in\mathbb R^3\) the thrust states. Its dynamics are
\(\dot x_A=f_{T1}(x_A,u^{\rm act})\), with four commanded joint rates and
three throttle inputs; the thrust states have time constant
\(\tau_T=0.040\,\mathrm{s}\).

The planner defines the task-relative coordinates using the deck-relative coordinates above, with
\begin{equation}
y=
\begin{bmatrix}
p-p_*\\
\operatorname{Log}(RR_*^\top)\\
v-v_*\\
\omega
\end{bmatrix},
\quad
R_*=I.
\label{eq:shipboard_task}
\end{equation}
The planner-facing estimator-error coordinates use the corresponding 12-dimensional local error $e = [\delta p^\top, \delta \phi^\top, \delta v^\top, \delta\omega^\top]^\top$ where $\delta\phi = \operatorname{Log}(R\hat R^\top)$.

Simulation terminates at first physical contact between the aircraft and the ship, or planner-declared abort. Success is then evaluated using the declared acceptable landing footprint and relative landing attitude, velocities, and angular rates.

\subsection{Evaluated estimators and planners}
We consider three estimator architectures: \textbf{E1}, a truth-driven controlled-error emulator, providing a biased truth-driven estimator to the planner, for mechanism validation, \textbf{E2}, an IMU-driven error-state EKF with aircraft/deck navigation, gyroscope and accelerometer biases, ignorant of commands, and \textbf{E3}, a two-mode Gaussian sum of the inertial EKFs, for which public covariance includes between-mode spread. These estimator backends differ in how they obtain their error response coefficients, with closed-form derivation for E1 and offline fitting for the filters. 
Anchors are initialized using a learned affine map of the four most recent causal 12D innovations, with separately fitted residual covariance $P_0$.

We additionally evaluate three planner variants: \textbf{P1}, our formulation, which compensates for predicted estimator error and penalizes an estimator risk metric, \textbf{P2}, a traditional belief-space planner for which covariance trace is the only penalized uncertainty term, and \textbf{P3}, a nominal-only planner. The planners are tuned identically, differing only in additional cost function terms.

\subsection{Predictive error model identification and evaluation}
\label{sec:predictive_results}

We first evaluate whether the estimator-error model predicts error unavailable to a zero-mean model, and then whether its predicted moments are calibrated against subsequently realized estimator error.

The identified contracts exhibit strongly horizon and operating regime-dependent behavior. We therefore report prediction error by forecast horizon and trajectory regime rather than relying only on pooled loss, and discuss one of the qualified E2 error models as an illustrative case. In this model, the task-scaled mean-prediction loss was 8.16\% lower than the zero-mean predictor when pooled over two experimental datasets. However, this improvement was not uniform with horizon. Over evaluation horizons of 0.25, 0.5, 1.0, and 2.0 s, the error reductions were 1.61\%, 7.61\%, 12.58\%, and 6.39\% respectively. Through the closed-loop experiments, E1 exhibited similarly horizon-dependent error dynamics to those of the identified E2 and E3 models. Along horizons of less than 0.25s, the E1 model reduced squared task-scaled prediction loss, relative to a zero-mean baseline, by 75.99\%, which dropped to a 17.98\% reduction between 0.25 and 0.5s, before an increase to 76.39\% reduction for horizons longer than 2s. 

Prediction quality also depended strongly on trajectory regime. In a separate E2 evaluation, the selected model reduced pooled prediction loss by 2.33\%, but this aggregate improvement was concentrated in an operating regime with large velocities and task-relative offsets, where loss decreased by 5.52\%. The same model performed worse than the zero-mean predictor during near and intermediate-range operating regimes, by 3.0\% and 1.01\% respectively. Hence, an error model that appears beneficial when averaged across a task distribution will not necessarily improve prediction in every portion of the maneuver. This regime dependence is particularly relevant in receding-horizon planning, because the distribution visited by the planner changes systematically as the vehicle progresses from approach to terminal alignment. These experiments show that estimator-error prediction is neither globally stationary nor characterized adequately by a single pooled accuracy or calibration metric. Predictive abilities can strengthen or reverse with horizon, transfer unevenly between regimes, and coexist with poor covariance calibration. We therefore interpret the planner-conditioned error model as a \emph{qualified local predictive model}, whose claimed validity is tied to the trajectory regimes, forecast horizons, and source populations on which its mean and covariance has been evaluated.

\begin{table}[t]
\centering
\scriptsize
\setlength{\tabcolsep}{3.2pt}
\renewcommand{\arraystretch}{1.05}
\begin{tabular}{llrrrr}
\toprule
Estimator & Candidate & $L_{\rm auto}/L_0$ & $L_{\rm cond}/L_0$ & $\Delta_\mu$ & $\Delta_{\rm cond}$ \\
\midrule
E2 & Best predictive & 1.001 & 0.580 & +42.0\% & +42.1\% \\
E2 & Qualified & 0.739 & 0.640 & +36.0\% & +13.4\% \\
E2 & Zero-mean baseline & 1.000 & 1.000 & +0.0\% & +0.0\% \\
\addlinespace[1pt]
E3 & Best predictive & 0.846 & 0.727 & +27.3\% & +14.0\% \\
E3 & Qualified & 1.000 & 1.000 & +0.0\% & +0.0\% \\
\bottomrule
\end{tabular}
\figstylecaption
\caption{Predictive performance of representative planner-conditioned error models. Losses are normalized by the zero-mean baseline $L_0$. $L_{\rm cond}$ is the loss of models that condition on planner information, while $L_{\rm auto}$ is the loss of those that ignore it. Positive $\Delta_\mu$ and $\Delta_{\rm cond}$ indicate improvement over the zero-mean and planner-ignorant baselines, respectively.}
\label{tab:contract_performance}
\end{table}

\subsection{Closed-loop landing performance}
\label{sec:closed_loop_results}
\begin{figure}[t]
    \centering
    \input{figures/fig_landing_performance}
    \caption{Representative 2D projection of matched ship-deck landing trial. Curves show measured deck-relative truth trajectories for P1 (blue), P2 (orange), and P3 (purple). P1 reaches the capture region $\kappa = 0.73$ while P2 ($\kappa = 1.14$) and P3 ($\kappa = 15.03$) terminate outside it. }
    \label{fig:landing}
\end{figure}

Across 16 matched trial groups using E1, P1 achieved 8/16 physical captures, compared with 6/16 for both P2 and P3. In the paired P1-P2 comparison, P1 was uniquely successful on five trials while P2 was uniquely successful on three. Notably, P1 obtained lower component-level capture gauges than P2 in pitch and pitch rate in 11/16 trials and in longitudinal velocity in 12/16 trials. Terminal task gauge also favored planner P1, with median gauges of 0.99, 1.60, and 1.87 for P1, P2, and P3 respectively. The corresponding mean gauges were 3.69, 9.71, and 3.57, respectively, although the P2 performance mean was strongly influenced by an isolated extreme failure. Thus, the matched trials show a modest observed capture advantage and improved typical terminal task state for P1, while the available cohort remains too small to establish a difference in capture probability. Beyond 2s, P2 underpredicted realized estimator error by a factor of 5.73, whereas P1 produced an observed-to-predicted ratio of 1.31.


\section{Conclusion}
This work introduced a model of planner-conditioned estimator error and its evolution through intermittent observation.

Our experiments show that this error can contain substantial predictable structure, though that structure is local, with prediction accuracy and risk calibration varying strongly with forecast horizon and operating regime. While our results show modest task-success and terminal-gauge improvement over existing planning formulations, our findings indicate that predictable estimator error can be planning-relevant, but exploiting it requires modeling identification over the regimes in which the model will be used. Future work should develop adaptive or regime-conditioned models, improve calibration of long-horizon risk, and evaluate  the approach over larger closed-loop cohorts and stochastic deck motion.

\label{sec:conclusion}

\section*{Acknowledgment}
ChatGPT 5.6 Sol (OpenAI) was used in the preparation of this document to generate TikZ code for figures \ref{fig:teaser}, \ref{fig:correction}, and \ref{fig:planning}, conduct preliminary literature reviews, format equations in \LaTeX, and proofread for notational consistency and grammar. All figures, equations, citations, and text were reviewed and confirmed by the authors, and the authors claim full responsibility for the content of this work.


\bibliography{Acc_bsp}

\bibliographystyle{IEEEtran}

\end{document}